\def\getdelim#1#2#3#4\relax{"#4}
\newcommand{\argmin}{\operatorname{argmin}\, }
\newcommand{\indicator}[1]{\llbracket #1 \rrbracket}
\newcommand{\bias}{r}
\newcommand{\X}{\mathbf{X}}
\newcommand{\Real}{\mathbb{R}}
\newcommand{\mbf}{\mathbf}
\newtheorem{theorem}{Theorem}
\title{Anomaly Detection using One-Class Neural Networks}
\author{
  Raghavendra Chalapathy \\
  University of Sydney,\\
  Capital Markets Co-operative Research Centre (CMCRC)\\
  \texttt{rcha9612@uni.sydney.edu.au} \\
   \And
 Aditya Krishna Menon \\
  Data61/CSIRO and the Australian National University\\
  \texttt{aditya.menon@data61.csiro.au} \\
  \And
 Sanjay Chawla \\
  Qatar Computing Research Institute (QCRI), HBKU\\
  \texttt{schawla@qf.org.qa} \\
  }
\begin{document}
\maketitle

\begin{abstract}
We propose a one-class neural network (OC-NN) model to detect anomalies in complex data sets. OC-NN
combines the ability of deep networks to extract progressively rich representation of data
with the one-class objective of creating a tight envelope around normal  data. The OC-NN approach
breaks new ground for the following crucial reason:  data representation in the hidden layer is
driven by the OC-NN objective and is thus customized for anomaly detection. This is a departure
from other approaches which use a hybrid approach of learning deep features using an autoencoder
and then feeding the features into a separate anomaly detection method like one-class SVM (OC-SVM).
The hybrid OC-SVM approach is sub-optimal because it is unable to influence representational
learning in the hidden layers. A comprehensive set of experiments demonstrate that on complex data
sets (like CIFAR and GTSRB), OC-NN performs on par with state-of-the-art methods and outperformed conventional shallow methods in some scenarios.

\keywords {one class svm,  anomalies detection, outlier detection, deep learning }

\end{abstract}

\maketitle

\section{Anomaly detection: motivation and challenges}
A common need when analysing real-world datasets is determining which instances stand out as being dissimilar to all others. Such instances are known as \emph{anomalies}, and the goal of \emph{anomaly detection} (also known as \emph{outlier detection}) is to determine all such instances in a data-driven fashion~\cite{chandola2007outlier}. Anomalies can be caused by errors in the data but sometimes are indicative of a new, previously unknown, underlying process; in fact Hawkins~\cite{hawkins1980identification} defines an outlier as an observation that {\it deviates so significantly from other observations as to arouse suspicion that it was generated by a different mechanism.} Unsupervised anomaly detection techniques uncover anomalies in an unlabeled test data, which plays a pivotal role in a variety of applications, such as, fraud detection, network intrusion detection and fault diagnosis. One-class Support Vector Machines (OC-SVM)~\cite{scholkopf2002support,tax2004support} are widely used, effective unsupervised techniques to identify anomalies. However, performance of OC-SVM is sub-optimal on complex, high dimensional datasets~\cite{vapnik1998statistical,vishwanathan2003simplesvm,bengio2007scaling}.
From recent literature, unsupervised anomaly detection using deep learning is proven to be very effective~\cite{zhou2017anomaly,chalapathy2017robust}. Deep learning methods for anomaly detection can be broadly classified into model architecture using autoencoders~\cite{andrews2016detecting} and hybrid models~\cite{erfani2016high}. Models involving autoencoders utilize magnitude of residual vector (i,e reconstruction error) for making anomaly assessments. While hybrid models mainly use autoencoder as feature extractor, wherein the hidden layer representations are used as input to traditional anomaly detection algorithms such as one-class SVM (OC-SVM). Following the success of transfer learning~\cite{pan2010survey} to obtain rich representative features, hybrid models have adopted pre-trained transfer learning models to obtain features as inputs to anomaly detection methods. Although using generic pre-trained networks for transfer learning representations is efficient, learning representations from scratch, on a moderately sized dataset, for a specific task of anomaly detection is shown to perform better~\cite{andrews2016transfer}. Since the hybrid models extract deep features using an autoencoder and then feed it to a separate anomaly detection method like OC-SVM, they fail to influence representational learning in the hidden layers. In this paper, we build on the theory to integrate a OC-SVM equivalent objective into the neural network architecture. The OC-NN combines the ability of deep networks to extract progressively rich representation of data alongwith the one-class objective, which obtains the hyperplane to separate all the normal data points from the origin. The OC-NN approach is novel for the following crucial reason:  data representation ,is driven by the OC-NN objective and is thus customized for anomaly detection. We show that OC-NN can achieve comparable or better performance in some scenarios than existing shallow state-of-the art methods for complex datasets, while having reasonable training and testing time compared to the existing methods.

We summarize our main contributions as follows:
\begin{itemize}
\item We derive a new one class neural network (OC-NN) model for anomaly detection. OC-NN uses a one class SVM like loss function to drive the training
of the neural network.
\item We propose an alternating minimization algorithm for learning the parameters of the OC-NN model. We observe that the subproblem of
the OC-NN objective is equivalent to a solving a quantile selection problem.
\item We carry out extensive experiments which convincingly demonstrate  that OC-NN  outperforms other state-of-the-art deep learning approaches
for anomaly detection on complex image and sequence data sets.
\end{itemize}

The rest of the paper is structured as follows. In Section~\ref{sec:background} we provide a detailed survey of related and relevant
work on anomaly detection. The main OC-NN model is developed in Section~\ref{sec:method}. The experiment setup, evaluation metrics and
model configurations are described in Section~\ref{sec:ocnnexperiment-setup}. The results and analysis of the experiments are the focus of
Section~\ref{sec:ocnnexperiment-results}. We conclude in Section~\ref{sec:conclusion} with a summary and directions for future work.

\section{Background and related work on anomaly detection}
\label{sec:background}
\label{sec:related}

Anomaly detection is a well-studied topic in Data Science~\cite{chandola2007outlier,charubook}. Unsupervised anomaly detection aims at discovering
rules to separate normal and anomalous data in the absence of labels. One-Class SVM (OC-SVM) is a popular unsupervised approach to detect anomalies, which constructs
a smooth boundary around the majority of probability mass of data~\cite{Scholkopf:2001}. OC-SVM will be  described in detail in Section~\ref{sec:ocsvm}.
In recent times, several approaches of feature selection and feature extraction methods have been proposed for complex, high-dimensional
data for use with OC-SVM ~\cite{cao2003comparison,neumann2005combined}.
Following the unprecedented success of using deep autoencoder networks, as feature extractors, in tasks as diverse as visual, speech anomaly detection~\cite{chong2017abnormal,marchi2017deep}, several hybrid models that combine feature extraction using deep learning and OC-SVM
have appeared~\cite{sohaib2017hybrid,erfani2016high}. The benefits of leveraging pre-trained transfer learning  representations for anomaly detection in hybrid models was made evident by the results obtained, using two publicly available~\footnote{Pretrained-models:http://www.vlfeat.org/matconvnet/pretrained/.} pre-trained CNN models: ImageNet-MatConvNet-VGG-F (VGG-F) and  ImageNet-MatConvNet-VGG-M (VGG-M)~\cite{andrews2016transfer}. However, these hybrid OC-SVM approaches are decoupled in the sense that the feature learning is task agnostic and not customized for anomaly detecion. Recently a deep model which trains a neural network by minimizing the volume of a hypersphere that encloses the network representations of the data is proposed ~\cite{pmlrv80ruff18a}, our approach differs from this approach by combining the ability of deep networks to extract progressively rich representation of data alongwith the one-class objective, which obtains the hyperplane to separate all the normal data points from the origin.
\subsection{Robust Deep Autoencoders for anomaly detection}
Besides the hybrid approaches which use OC-SVM with deep learning features another approach for anomaly detection is to use deep autoencoders.
Inspired by RPCA~\cite{xu2010robust}, unsupervised anomaly detection techniques such as robust deep autoencoders can be used to separate
normal from anomalous data~\cite{zhou2017anomaly,chalapathy2017robust}.
Robust Deep Autoencoder (RDA) or Robust Deep Convolutional Autoencoder (RCAE) decompose input data $X$ into two parts $X = L_D + S$, where $L_D$ represents the latent representation the hidden layer of the autoencoder. The matrix  $S$ captures noise and outliers which are hard to reconstruct as shown in Equation~\ref{eqn:robust-ae}. The decomposition is carried out by optimizing the objective function shown in Equation~\ref{eqn:robust-ae}.

\begin{equation}
	\label{eqn:robust-ae}
	\min_{\theta, S} + ||L_D - D_{\theta}(E_{\theta}(L_D)) ||_{2}+ \lambda \cdot \| S^T \|_{2,1}
\end{equation}
 \hspace{1.8cm}   $s.t. \hspace{0.2cm} X - L_D -S = 0 $

The above optimization problem is solved using a combination of backpropagation and Alternating Direction Method of Multipliers (ADMM) approach~\cite{boyd2004convex}. In our experiments  we have carried out a detailed comparision between OC-NN and approaches based on robust autoencoders.

\subsection{One-Class SVM for anomaly detection}
\label{sec:ocsvm}

One-Class SVM (OC-SVM) is a widely used approach to discover anomalies in an unsupervised fashion~\cite{scholkopf2002support}. OC-SVMs are a special case of support vector machine, which learns a  hyperplane to separate all the data points from the origin in a reproducing kernel Hilbert space (RKHS) and maximises the distance from this hyperplane to the origin. Intuitively in OC-SVM all the data points are considered as positively labeled instances and the origin
as the only negative labeled instance. More specifically, given a training data $\X$, a set without any class information, and $\Phi(\X)$  a RKHS map function
from the input space to the feature space $F$,  a
hyper-plane or linear decision function $f(\X_{n :})$ in the feature space $F$ is constructed as
$f( \X_{n :} ) = w^T \Phi(\X_{n :}) - \bias$,  to separate as many as possible of the mapped vectors
${ \Phi(\X_{n :}), n: 1,2,...,N}$ from the origin. Here  $w$ is the norm perpendicular to the hyper-plane and $\bias$ is the bias of the hyper-plane. In order to
obtain $w$ and $\bias$, we need to solve the
following optimization problem,

\begin{equation}
\label{eqn:ocsvm-objective}
 \min_{w,\bias} \frac{1}{2} \| w \|_{2}^2+ \frac{1}{\nu} \cdot \frac{1}{N} \sum_{n = 1}^N \max( 0, \bias - \langle w, \Phi(\X_{n :}) \rangle ) - \bias.
\end{equation}

where $\nu \in (0,1)$, is a parameter that
controls a trade off between maximizing the distance of the
hyper-plane from the origin and the number of data points
that are allowed to cross the hyper-plane (the false positives).

\section{From One Class SVM  to One Class Neural Networks}
\label{sec:method}

We now present our one-class Neural Network (OC-NN) model for unsupervised anomaly detection.
The method can be seen as designing a neural architecture using an  OC-SVM equivalent loss function.
Using OC-NN we will be able to exploit and refine features obtained from unsupervised tranfer learning specifically for anomaly
detection. This in turn will it make it possible to discern anomalies in complex data sets where the decision boundary between normal and anomalous
is highly nonlinear.
\vspace{-0.1cm}
\subsection{One-Class Neural Networks (OC-NN)}
\label{sec:oc-nn}
We design a simple feed forward network with one hiden layer having linear or sigmoid activation $g(\cdot )$ and one output node. Generalizations to deeper
architectures is straightforward. The OC-NN objective can be formulated as:
\begin{equation}
	\label{eqn:oc-nn}
	\min_{w, V, \bias} \frac{1}{2} \| w \|_{2}^2 + \frac{1}{2} \| V \|_F^2 + \frac{1}{\nu} \cdot \frac{1}{N} \sum_{n= 1}^N \max( 0, \bias - \langle w, g( V  \X_{n :} ) \rangle ) - \bias
\end{equation}

where $w$ is the scalar output obtained from the hidden to output layer,
$V$ is the weight matrix from input to hidden units.
Thus the \underline{key insight} of the paper is to replace the dot product $\mathbf{\langle w,\Phi(\X_{n :}) \rangle}$ in OC-SVM with the dot product $\mathbf{\langle w,g(V\X_{n :})\rangle}$. This change will make it possible to leverage transfer learning features obtained using an autoencoder and create an additional layer to refine
the features for anomaly detection. However, the price for the change is that the objective becomes non-convex and thus the resulting algorithm
for inferring the parameters of the model will not lead to a global optima.

\vspace{-0.1cm}
\subsection{Training the model}
\label{sec:training}
We can optimize Equation~\ref{eqn:oc-nn} using an alternate minimization approach: We first fix $r$ and optimize for $w$ and $V$. We then use
the new values of $w$ and $V$ to optimize $r$. However, as we  will show, the optimal value of $r$ is just the $\upsilon$-quantile of
the array $\langle w,g(Vx_{n})\rangle$. We first define the objective to solve for $w$ and $V$ as
\begin{equation}
                \label{eqn:minimize_w_V}
                 \underset{w, V}{\argmin} \frac{1}{2} \| w \|_{2}^2 + \frac{1}{2} \| V \|_F^2 + \frac{1}{\upsilon} \cdot \frac{1}{N} \sum_{n = 1}^N \ell( y_n, \hat{y}_n( w, V ) )
                 \end{equation}
                where
                \begin{align*}
                        \ell( y, \hat{y} ) &= \max( 0, y - \hat{y} ) \\
                        y_n       &= \bias \\
                        \hat{y}_n( w, V ) &= \langle w, g( V x_n ) \rangle
                \end{align*}

Similarly the optimization problem for $\bias$ is
\begin{equation}
\label{eqn:minimize_r}
\underset{\bias}{\argmin} \left( \frac{1}{N\upsilon} \cdot \sum_{n = 1}^N \max( 0, \bias - \hat{y}_n ) \right) - r
\end{equation}
\begin{theorem}
Given $w$ and $V$ obtained from solving Equation~\ref{eqn:minimize_w_V}, the solution to Equation~\ref{eqn:minimize_r} is given
by the $\upsilon^{\text{th}}$ quantile of $\{ \hat{y}_n \}_{n = 1}^N$, where
                $$ \hat{y}_n = \langle w, g(V x_n ) \rangle. $$
\end{theorem}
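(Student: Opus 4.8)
The plan is to exploit the fact that the objective of Equation~\ref{eqn:minimize_r}, regarded as a function of $\bias$ alone (with $w,V$ and hence the values $\hat{y}_n$ held fixed), is a one-dimensional convex function, so that a first-order optimality condition both characterizes and identifies the minimizer. Write
\[ F(\bias) = \frac{1}{N\upsilon} \sum_{n=1}^N \max(0, \bias - \hat{y}_n) - \bias. \]
Each summand $\bias \mapsto \max(0, \bias - \hat{y}_n)$ is convex and piecewise linear, and $-\bias$ is affine, so $F$ is convex; moreover it is coercive (as $\bias \to -\infty$ the hinge terms vanish and $F(\bias) = -\bias \to +\infty$, while as $\bias \to +\infty$ the leading behaviour is $\bias(1/\upsilon - 1) \to +\infty$ since $\upsilon \in (0,1)$), so a finite minimizer exists and $\bias^\star$ is optimal if and only if $0 \in \partial F(\bias^\star)$.

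Next I would compute the one-sided derivatives. Away from the breakpoints $\{\hat{y}_n\}$ we have $\frac{d}{d\bias}\max(0,\bias-\hat{y}_n) = \indicator{\bias > \hat{y}_n}$, so wherever $F$ is differentiable
\[ F'(\bias) = \frac{1}{N\upsilon}\,\bigl|\{\, n : \hat{y}_n < \bias \,\}\bigr| - 1. \]
At a breakpoint the subdifferential is the interval between the left derivative (counting $\hat{y}_n < \bias$) and the right derivative (counting $\hat{y}_n \le \bias$), so the stationarity condition $0 \in \partial F(\bias^\star)$ becomes the pair of inequalities
\[ \bigl|\{\, n : \hat{y}_n < \bias^\star \,\}\bigr| \;\le\; N\upsilon \;\le\; \bigl|\{\, n : \hat{y}_n \le \bias^\star \,\}\bigr|. \]

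Finally I would recognize these inequalities as exactly the defining property of the empirical $\upsilon^{\text{th}}$ quantile of the finite multiset $\{\hat{y}_n\}_{n=1}^N$: at least an $\upsilon$-fraction of the values lie at or below $\bias^\star$, while strictly fewer than an $\upsilon$-fraction lie strictly below it. This simultaneously shows that any minimizer is a $\upsilon$-quantile and, conversely, that the $\upsilon$-quantile satisfies the optimality condition, which establishes the claim.

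The main obstacle I anticipate is the careful bookkeeping at the non-differentiable points $\bias = \hat{y}_n$: the clean equivalence holds only if the subgradient inequalities are matched to the correct strict-versus-non-strict counts, and some care is needed when several $\hat{y}_n$ coincide or when $N\upsilon$ is an integer, since then $F$ has an affine segment of slope zero and the minimizer is an entire interval rather than a point. I would therefore read the theorem as asserting that the $\upsilon$-quantile is \emph{among} the minimizers, and flag these tie/degenerate cases explicitly rather than dismiss them as routine.
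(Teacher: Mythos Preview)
Your proposal is correct and follows essentially the same route as the paper: reduce to a one-dimensional convex problem in $\bias$, compute the first-order optimality condition, and recognize it as the empirical $\upsilon$-quantile condition. The paper first rewrites the objective into the pinball (check) loss form before differentiating and simply sets $F'(\bias)=0$ to obtain $\frac{1}{N}\sum_n \indicator{\hat{y}_n<\bias}=\upsilon$, whereas your use of subgradients and the resulting inequality pair is the more careful version of the same argument and correctly handles the non-differentiable breakpoints and tie cases that the paper glosses over.
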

\begin{proof}
 We can rewrite Equation~\ref{eqn:minimize_r} as:
\begin{align*}
\underset{\bias}{\argmin} \left( \frac{1}{N\upsilon} \cdot \sum_{n = 1}^N \max( 0, \bias - \hat{y}_n ) \right) - \left( \bias - \frac{1}{N} \sum_{n = 1}^{N} \hat{y}_n \right) \\
= \underset{\bias}{\argmin} \left( \frac{1}{N\upsilon} \cdot \sum_{n = 1}^N \max( 0, \bias - \hat{y}_n ) \right) - \left( \frac{1}{N} \sum_{n = 1}^{N} \left[ \bias - \hat{y}_n \right] \right) \\
= \underset{\bias}{\argmin} \left( \sum_{n = 1}^N \max( 0, \bias - \hat{y}_n ) \right) - \upsilon \cdot \left( \sum_{n = 1}^{N} \left[ \bias - \hat{y}_n \right] \right) \\
= \underset{\bias}{\argmin} \sum_{n = 1}^N \left[ \max( 0, \bias - \hat{y}_n ) - \upsilon \cdot \left( \bias - \hat{y}_n \right) \right] \\
= \underset{\bias}{\argmin} \sum_{n = 1}^N
\begin{cases} (1 - \upsilon) \cdot \left( \bias - \hat{y}_n \right) & \text{ if } \bias - \hat{y}_n > 0 \\ - \upsilon \cdot \left( \bias - \hat{y}_n \right) & \text{ otherwise}
\end{cases}
\end{align*}
\vspace{-0.1cm}
We can observe that the derivative with respect to $r$ is
$$ F'( r ) = \sum_{n = 1}^N \begin{cases} (1 - \upsilon) & \text{ if } \bias - \hat{y}_n > 0 \\ -\upsilon & \text{ otherwise. } \end{cases} $$
Thus, by F'( r ) = 0 we obtain
\begin{align*}
	(1 - \upsilon) \cdot \sum_{n = 1}^N \indicator{ \bias - \hat{y}_n > 0 } &= \upsilon \cdot \sum_{n = 1}^N \indicator{ \bias - \hat{y}_n \leq 0 } \\
	&= \upsilon \cdot \sum_{n = 1}^N (1 - \indicator{ \bias - \hat{y}_n > 0 }) \\
	&= \upsilon \cdot N - \upsilon \cdot \sum_{n = 1}^N \indicator{ \bias - \hat{y}_n > 0 },
\end{align*}
or
\begin{equation}
 \frac{1}{N} \sum_{n = 1}^N \indicator{ \bias -\hat{y}_n > 0 }= \frac{1}{N} \sum_{n = 1}^N \indicator{ \hat{y}_n < r } = \upsilon
\end{equation}\\
This means we would require the $\nu^{\text{th}}$ quantile of $\{ \hat{y}_n \}_{n = 1}^N$.
\end{proof}
\vspace{-0.3cm}
\subsection{OC-NN Algorithm}
\label{sec:algorithm}
We summarize the solution in Algorithm~\ref{alg1}. We initialize $\bias^{(0)}$ in Line 2. We learn the parameters($w,V$) of the neural network
using the standard Backpropogation(BP) algorithm (Line 7). In the experiment section, we will train the model using features extracted from
an autoencoder instead of raw data points. However this has no impact on the OC-NN algorithm. As show  in Theorem 3.1, we solve for $\bias$
using the $\upsilon$-quantile of the scores $\langle y_{n} \rangle$. Once the convergence criterion is satisfied, the data points are
labeled normal or anomalous using the decision function $S_{n} = sgn(\hat{y}_{n} - r)$.

\begin{algorithm}
\caption{one-class neural network (OC-NN) algorithm}\label{alg:oc-nn}
\label{alg1}
\begin{algorithmic}[1]
\State{
\textbf{Input:} Set of points $\X_{n :},\hspace{0.2cm} n: 1,...,N$
}
\State{
\textbf{Output:} A Set of decision scores $S_{n :}=\hat{y}_{n :}$, n: 1,...,N for X
}

\State Initialise $\bias^{(0)}$
\State $t \gets 0$
\While{(no convergence achieved)}
\State Find $(w^{(t+1)}, V^{(t+1)})$ \Comment Optimize Equation~\ref{eqn:minimize_w_V} using BP.
\State $r^{t+1} \gets \nu^{\text{th}}$ quantile of $\{ \hat{y}_n^{t+1} \}_{n = 1}^N$
\State $t \gets t + 1$
\EndWhile \label{endwhile}
\State \textbf{end}
\State Compute decision score $S_{n :}=  \hat{y}_n -r  \mbox{ for each }  \X_{n :}$
\If{($S_{n :}$ $\geq$ 0)}
    \State $\X_{n :}$ is normal point
   \Else
    \State $\X_{n :}$ is anomalous
\EndIf

\State \textbf{return} $\{S_n\}$

\end{algorithmic}
\end{algorithm}
%
\noindent
{\bf Example:} We give a small example to illustrate that the minimum of the function
\[f(r) =  \left( \frac{1}{N\upsilon} \cdot \sum_{n = 1}^N \max( 0, \bias - y_n ) \right) - r\] occurs at the
the $\upsilon$-quantile of the set $\{y_{n}\}$. \\

Let $y =\{1,2,3,4,5,6,7,8,9\}$ and  $\upsilon = 0.33$. Then the minimum will occur at $f(3)$ as detailed in
the table below.
\[
\begin{array}{|r|l|r|}  \hline \hline
r & f(r) (\mbox{expr}) & f(r) (\mbox{value}) \\ \hline \hline
1 & \frac{1}{9*.33}[0 + 0 +\ldots 0] - 1 & -1.00 \\ \hline
2 & \frac{1}{9*.33}[1 + 0 +\ldots 0] - 2 & -1.67 \\ \hline
\rowcolor{green!20}
3 & \frac{1}{9*.33}[2 + 1 +\ldots 0] - 3 & -1.99  \\ \hline
4 & \frac{1}{9*.33}[3 + 2 + 1 +\ldots 0] - 4 & -1.98  \\ \hline
5 & \frac{1}{9*.33}[4 + 3 + 2 + 1\ldots 0] - 5 & -1.63  \\ \hline
6 & \frac{1}{9*.33}[5 + 4  + 3 + 2 + 1\ldots 0] - 6 & -0.94  \\ \hline
7 & \frac{1}{9*.33}[6 + 5  + 4 + 3 + 2 + 1\ldots 0] - 7 & 0.07  \\ \hline
8 & \frac{1}{9*.33}[7 + 6  + 5 + 4 + 3 + 2 + 1\ldots 0] - 8 & 1.43  \\ \hline
9 & \frac{1}{9*.33}[8 + 7  + 6 + 4 + \ldots 0] - 9 & 3.12  \\ \hline
\end{array}
\]

\section{Experimental Setup}
\label{sec:ocnnexperiment-setup}

In this section, we show the empirical effectiveness of OC-NN formulation over the state-of-the-art methods on real-world data. Although our method is applicable in any context where autoencoders may be used for feature representation, e.g., speech.  Our primary focus will be on non-trivial high dimensional images.
\subsection{Methods compared}
\label{sec:methods_compared}
We compare our proposed one-class neural networks (OC-NN) with the following state-of-the-art methods for anomaly detection:
\let\labelitemi\labelitemii
\begin{itemize}{}
    \item \textbf{OC-SVM -SVDD} as per formulation in ~\cite{scholkopf2002support}
    \item \textbf{Isolation Forest} as per formulation in ~\cite{liu2008isolation}.
    \item \textbf{Kernel Density Estimation (KDE)} as per formulation in ~\cite{parzen1962estimation}.
    \item \textbf{Deep Convolutional Autoencoder (DCAE)} as per formulation in ~\cite{masci2011stacked}.
    \item \textbf{AnoGAN} as per formulation in ~\cite{radford2015unsupervised}.
    \item \textbf{Soft-Bound and One Class Deep SVDD} as per formulation in ~\cite{pmlrv80ruff18a}.
    \item \textbf{Robust Convolutional Autoencoder (RCAE)} as per formulation in ~\cite{chalapathy2017robust}.
    \item \textbf{One-class neural networks (OC-NN) \footnote{\url{https://github.com/raghavchalapathy/oc-nn}}}, our proposed model as per Equation \ref{eqn:oc-nn}.
\end{itemize}
We used Keras~\cite{chollet2015keras} and  TensorFlow~\cite{abadi2016tensorflow} for the implementation of OC-NN, DCAE, RCAE \footnote{\url{https://github.com/raghavchalapathy/rcae}}
For OC-SVM\footnote{\url{http://scikit-learn.org/stable/auto_examples/svm/plot_oneclass.html}} and Isolation Forest\footnote{\url{http://scikit-learn.org/stable/modules/generated/sklearn.ensemble.IsolationForest.html}}, we used publicly available implementations.

\begin{table}[!t]
    \centering
    \renewcommand{\arraystretch}{1.25}
    \setlength{\tabcolsep}{6pt}
    \begin{tabular}{@{}llll@{}}
        \toprule
        \toprule
        Dataset & \# instances & \# anomalies & \# features \\
        \toprule
        {\tt Synthetic}       & 190   & 10                                        & 512 \\
        {\tt MNIST}           & single class   & 1\%  ( from all class)            & 784 \\
        {\tt CIFAR$-$10}      & single class    & 10\% ( from all class)           &3072 \\
        {\tt GTSRB }           & 1050 (stop signs )     & 100 (boundary attack)  &3072 \\
        \bottomrule
    \end{tabular}
    \caption{Summary of datasets used in experiments.}
    \label{tbl:datasets}
    \vspace{-\baselineskip}
\end{table}

\subsection{Datasets}
We compare all methods on synthetic and four real-world datasets as summarized below :
\begin{itemize}
    \item {\tt Synthetic Data}, consisting of 190 normal data points 10 anomalous points drawn from normal distribution with dimension $512$.
    \item {\tt MNIST}, consisting of 60000 $28\times28$ grayscale images of handwritten digits  in 10 classes, with 10000  test images~\cite{lecun2010mnist}.
    \item {\tt GTSRB }, are $32\times32$ colour images comprising of  adversarial boundary attack on stop signs boards~\cite{stallkamp2011german}.
    \item {\tt CIFAR$-$10} consisting of 60000 $32\times32$ colour images in 10 classes, with 6000 images per class~\cite{krizhevsky2009learning}.
\end{itemize}

For each dataset, we perform further processing to create a well-posed anomaly detection task, as described in the next section.
\vspace{-0.3 cm}

\subsection{Evaluation of Models}
\label{sec:evaluationOfModels}
\subsubsection{{Baseline Model Parameters:}}
\label{sec:baselinemodel_parameters.}
The proposed OC-NN method is compared with several state-of-the-art baseline models as illustrated in Table~\ref{tab:ocnn_results}. The model parameters of shallow baseline methods are used as per implementation in~\cite{pmlrv80ruff18a}. Shallow Baselines (i) Kernel OC-SVM/SVDD with
Gaussian kernel. We select the inverse length scale $\gamma$ from
$\gamma$  $\in$ {$2^{ - 10}$ , $2^{ - 9}$, . . . , $2^{ - 1}$ via grid search using the performance on a small holdout set (10) \% of randomly drawn test samples). We run all experiments for $\nu = 0.1$ and report the better result. (ii) Kernel density estimation (KDE). We select the bandwidth \textit{h} of the Gaussian kernel from \textit{h} $ \in {2^{0.5} , 2 ^1, . . . , 2^5}$ via 5-fold cross-validation using  the log-likelihood score. (iii) For the  Isolation Forest (IF) we set the number of trees to \textit{t} = 100 and the sub-sampling size to $\Psi = 256$, as recommended in the original work~\cite{pmlrv80ruff18a}.
\vspace{-0.4cm}
\subsubsection{ {Deep Baseline Models:}}
We compare OC$-$NN models to  four deep approaches described Section ~\ref{sec:methods_compared}.
We choose to train DCAE using the Mean sqaure error (MSE) loss since our experiments are on image data. For the DCAE encoder, we employ the same network architectures as we use for Deep SVDD, RCAE, and OC-NN models. The decoder is then constructed symmetrically, where we substitute max-pooling with upsampling. For AnoGAN we follow the implementation as per ~\cite{radford2015unsupervised} and set
the latent space dimensionality to $256$.
For we Deep SVDD, follow the implementation as per ~\cite{pmlrv80ruff18a} and employ a  two phase learning rate schedule (searching + fine tuning) with initial learning rate $\eta$ = $10^{ - 4}$  and subsequently $\eta$ = $10^{ - 5}$. For DCAE we train 250 + 100 epochs, for Deep SVDD 150 + 100. Leaky ReLU activations are used with leakiness $\alpha=0.1$. For RCAE we train the autoencoder using the robust loss and follow the parameter settings as per formulation in ~\cite{chalapathy2017robust}.

\subsubsection{ {One-class neural Networks (OC-NN):}}
\label{model_architecture}
In OC-NN technique firstly, a deep autoencoder is trained to obtain the representative features of the input as illustrated in Figure~\ref{fig:model-architecture}(a). Then, the encoder layers of this pre-trained autoencoder is copied and fed as input to the feed-forward network with one hidden layer as shown in Figure~\ref{fig:model-architecture}(b). The summary of feed forward network architecture's used for various datasets is presented in Table~\ref{tbl:feed-forward-OC-NN}. The weights of encoder network are not frozen (but trained) while we learn feed-forward network weights, following the algorithm summarized in Section~\ref{sec:algorithm}. A feed-forward neural network consisting of single hidden layer, with  linear activation functions produced the best results, as per Equation~\ref{eqn:oc-nn}.  The optimal value of parameter $\nu$ $\in$ ${[0, 1]}$ which is equivalent to the percentage of anomalies for each data set, is set according to respective outlier proportions.

\vspace{-0.2cm}

\begin{figure}[!t]
     \begin{subfigure}[b]{1\textwidth}
   \centering
   {\includegraphics[scale=0.50]
{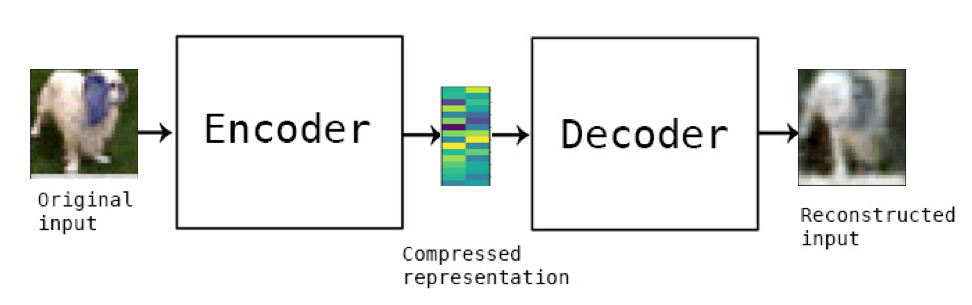}}
\caption{Autoencoder.}
        \end{subfigure}%
        \hfill
        \vspace{2mm}
     \begin{subfigure}[b]{1\textwidth}
\centering
   {\includegraphics[scale=0.50]{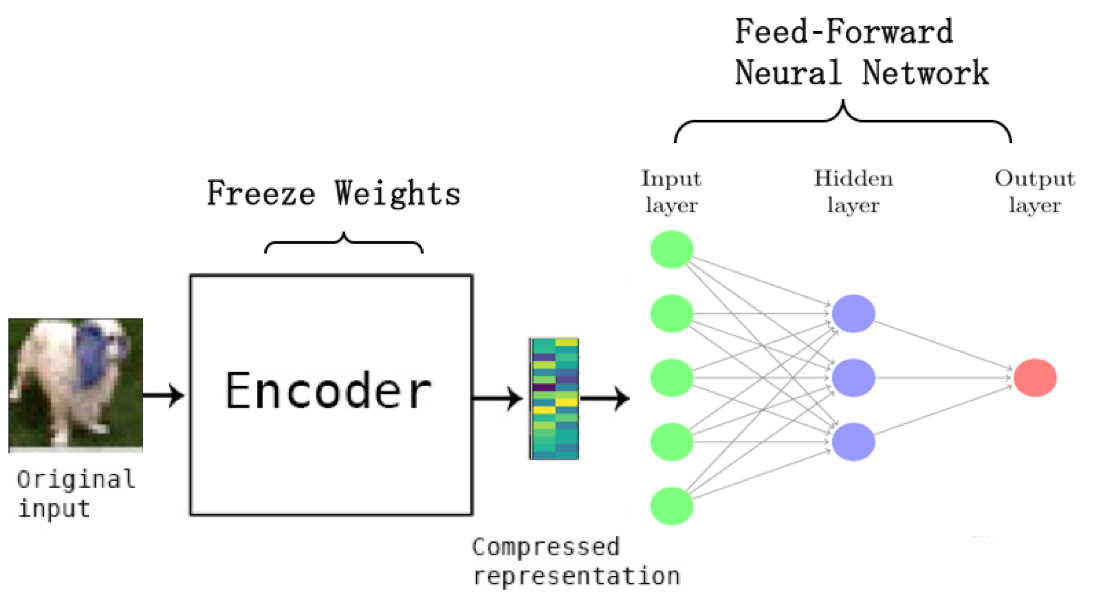}}
 \caption{One-class neural networks.}
        \end{subfigure}%
    \caption{
     Model architecture of Autoencoder  and the proposed one-class neural networks (OC-NN).
    }
    \label{fig:model-architecture}
\end{figure}

\begin{table}[!t]
    \centering
    \renewcommand{\arraystretch}{1.25}
    \setlength{\tabcolsep}{6pt}
    \begin{tabular}{@{}llll@{}}
        \toprule
        \toprule
         Dataset& \#    Input (features) & \# hidden layer( or output) \# optional layer  \\
        \toprule
        {\tt Synthetic }      & 512     & 128   & 1 \\
        {\tt MNIST}           & 32      & 32    & None \\
        {\tt CIFAR$-$10}      & 128     & 32    & None \\
        {\tt GTSRB }          & 128     & 32    & 16 \\
        \bottomrule
    \end{tabular}
    \caption{Summary of best performing feed-forward network architecture's used in OC-NN model for experiments.}
    \label{tbl:feed-forward-OC-NN}
    \vspace{-\baselineskip}
\end{table}



\section{Experimental Results}
\label{sec:ocnnexperiment-results}

In this section, we present empirical results produced by OC-NN model on synthetic and real data sets. We perform a comprehensive set of experiments to demonstrate that on complex data sets OC-NN performs on par with state-of-the-art methods and outperformed conventional shallow methods in some scenarios.
\subsection{ Synthetic Data}
Single cluster consisting of 190 data points using $\mu=0$ and $\sigma = 2$ are generated as normal points , along with 10 anomalous points drawn from normal distribution with $\mu=0$ and $\sigma = 10$ having  dimension $d=512$. Intuitively, the latter normally distributed points are treated as being anomalous, as the corresponding points have different distribution characteristics to the majority of the training data. Each data point is flattened as a row vector, yielding a 190 $\times$ 512 training matrix and 10 $\times$ 512 testing matrix. We use a simple feed-forward neural network model with one-class neural network  objective as per Equation~\ref{eqn:oc-nn}.\\
\textbf{Results}.
From Figure~\ref{fig:synthetic-histogram}, we see that it is a near certainty for all $10$ anomalous points, are accurately identified as outliers with decision scores being negative for anomalies. It is evident that, the OC-NN formulation performs on par with classical OC-SVM in detecting the anomalous data points. In the next sections we apply to image and sequential data and demonstrate its effectiveness.

\begin{figure}
    \centering
    \includegraphics[scale=0.38]{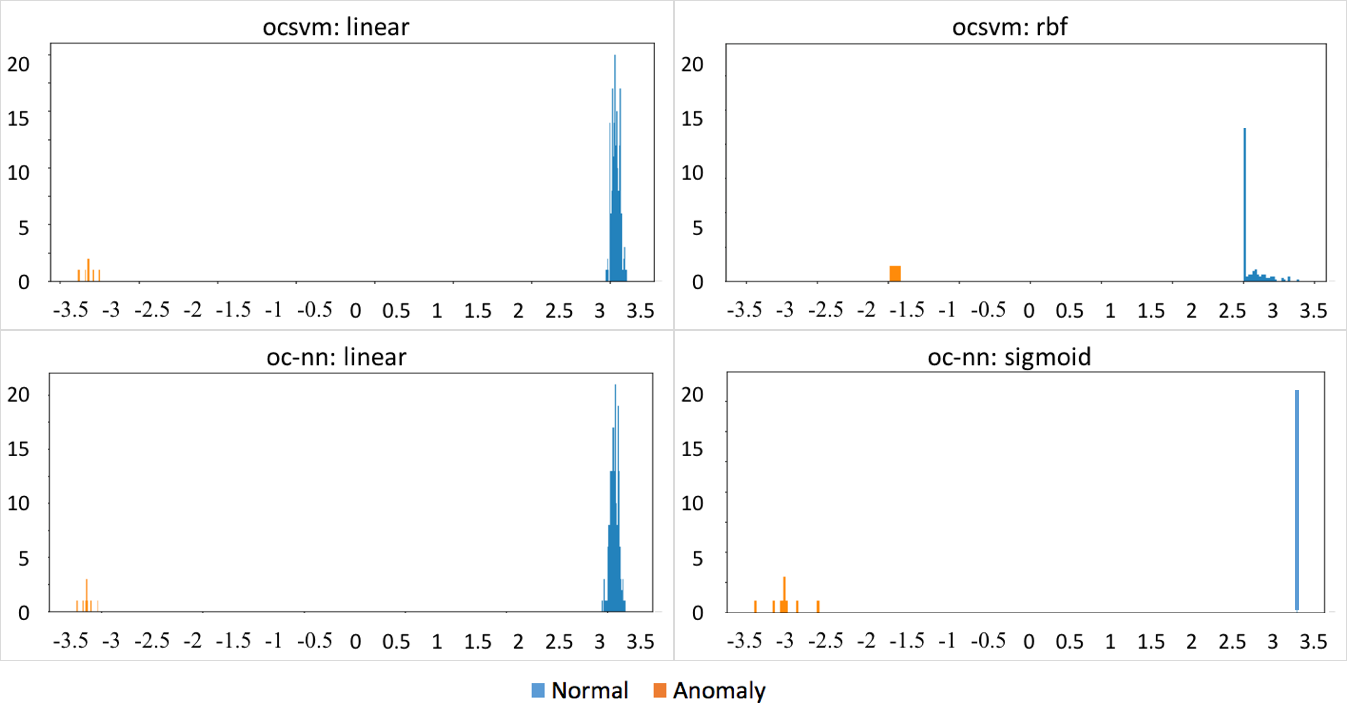}
    \caption{Decision Score Histogram of anomalous vs normal data points, {\tt synthetic} dataset.}
    \label{fig:synthetic-histogram}
\end{figure}

  \begin{figure}[htp]
    \centering
    \subcaptionbox{Normal samples.\label{fig3:a}}{\includegraphics[height=8cm,width=1.6in]{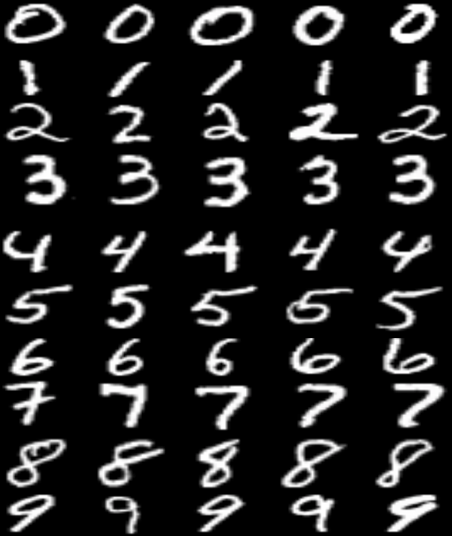}}\hspace{1em}%
    \subcaptionbox{In-class anomalies.\label{fig3:b}}{\includegraphics[height=8cm,width=1.6in]{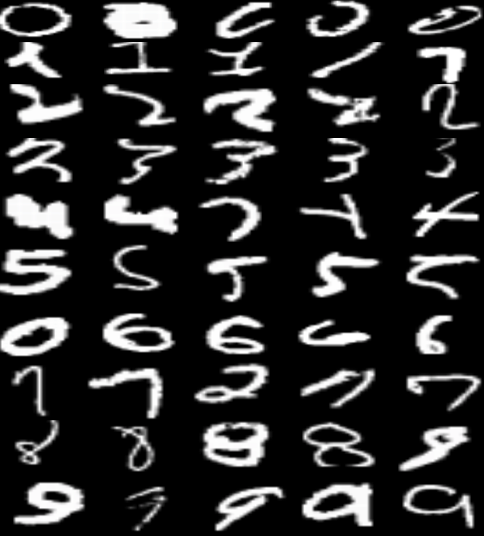}}
    \caption{MNIST Most normal and in-class anomalous MNIST digits detected by RCAE.}
    \label{fig:mnistRCAEresults}
  \end{figure}

    \begin{figure}[htp]
      \centering
      \subcaptionbox{Normal samples.\label{fig3:a}}{\includegraphics[height=8cm,width=1.6in]{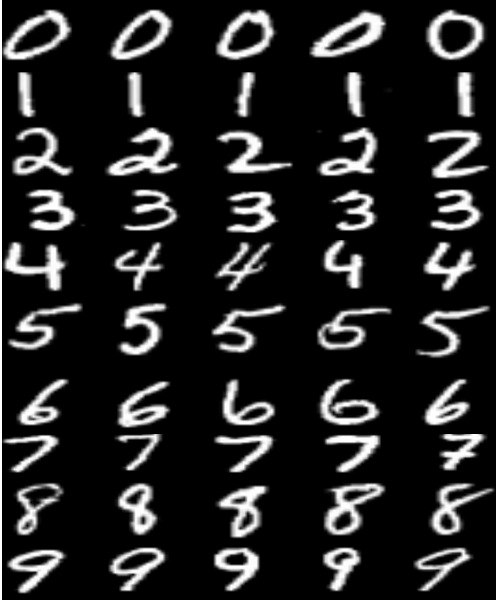}}\hspace{1em}%
      \subcaptionbox{In-class anomalies.\label{fig3:b}}{\includegraphics[height=8cm,width=1.6in]{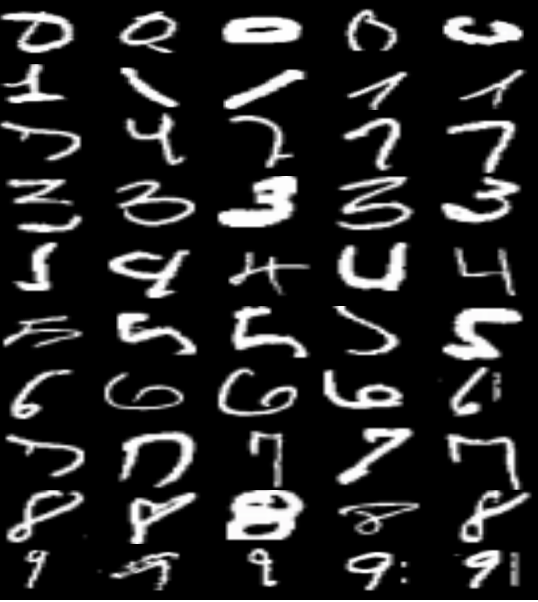}}
      \caption{MNIST Most normal and in-class anomalous MNIST digits detected by OC-NN.}
            \label{fig:mnistOCNNresults}
    \end{figure}

\subsection{ OC-NN on MNIST and CIFAR-10}
\textbf{Setup}
Both MNIST and CIFAR-10  have ten different classes from which we construct one-class classification dataset.  One of the classes is the normal class and
samples from the remaining classes represent anomalies. We use the original training and test splits in our
experiments and only train with training set examples from
the respective normal class. This produces normal instances set of sizes
of n $\approx$ 6000 for MNIST and n $\approx$ 5000 for CIFAR-10 respectively.
Both MNIST and CIFAR-10 test sets have $10000$ samples per class.
The training set comprises of normal instances for each class along with anomalies sampled from
all the other nine classes. The number of anomalies used within training set consists of 1\% normal class for MNIST and  10\% of normal class for CIFAR-10 dataset respectively. We pre-process all images with global contrast normalization using the $L1$
norm and finally rescale to [0, 1] via min-max-scaling.\\
\textbf{Network architectures} For both MNIST and CIFAR-10 datasets, we employ LeNet type
CNNs, wherein each convolutional module consists of a convolutional layer followed by leaky ReLU activations and $2 \times 2$ max-pooling. On MNIST, we use a CNN with two modules, $8\times(5\times5\times1)$-filters followed by $4\times(5\times5\times1)$-filters, and a final dense layer of 32 units. On CIFAR-10,
we use a CNN with three modules, $32 \times (5 \times 5 \times3)$-filters,
$64\times(5\times5\times3)$-filters, and $128\times(5\times5\times3)$-filters, followed
by a final dense layer of $128$ units. We use a batch size of
$200$ and set the weight decay hyperparameter to $ \lambda= 10^{ - 5}$.\\
{\textbf{Results:}}
Results are presented in Table ~\ref{tab:ocnn_results}. RCAE
clearly outperforms both its shallow and deep competitors
on MNIST. On CIFAR-10 the results are convincing but for certain classes the shallow baseline methods outperform the deep models. OC-NN, Soft and One-class Deep SVDD however, shows an overall robust performance. On CIFAR-10 classes such as $AUTOMOBILE$, $BIRD$ and $DEER$ which have less global contrast, as illustrated in Table~\ref{tab:ocnn_results} (indicated in blue color)  OC-NN seems to outperform the shallow methods, Soft and One-class Deep SVDD methods, this is indicative of their future potential on similar data instances.  It is interesting to note that shallow OCSVM/SVDD and KDE perform better than deep methods on two of the ten CIFAR-10 classes. We can see that normal examples of the classes  such as $FROG$ and $TRUCK$ on which OCSVM/SVDD performs best as illustrated in Figure~\ref{fig:ocsvmresults} seem to have strong global structures. For example, TRUCK images are mostly divided horizontally into street and sky, and  $FROG$ have similar colors globally. For these classes,  the performance significantly depends on choice of network architecture. Notably, the One-Class Deep SVDD performs slightly better than its soft-boundary counterpart on both datasets. Figure ~\ref{fig:mnistRCAEresults},  Figure ~\ref{fig:cifar10_rcae_results}  and Figure ~\ref{fig:mnistOCNNresults}, Figure ~\ref{fig:cifar10_ocnn_results} show examples of the most normal and most anomalous in-class samples detected by RCAE and OC-NN for MNIST and CIFAR-10 dataset respectively.

\begin{figure}[htp]
    \centering
    \subcaptionbox{Normal samples.\label{fig3:a}}{\includegraphics[width=1.6in]{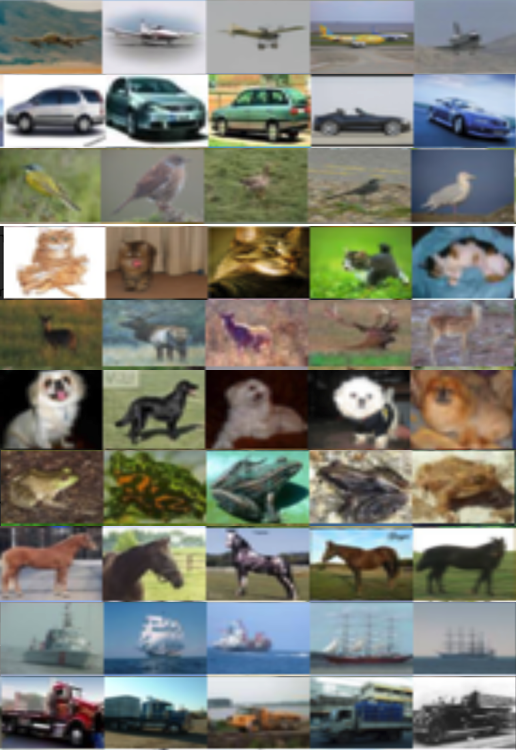}}\hspace{1em}%
    \subcaptionbox{In-class anomalies.\label{fig3:b}}{\includegraphics[width=1.6in]{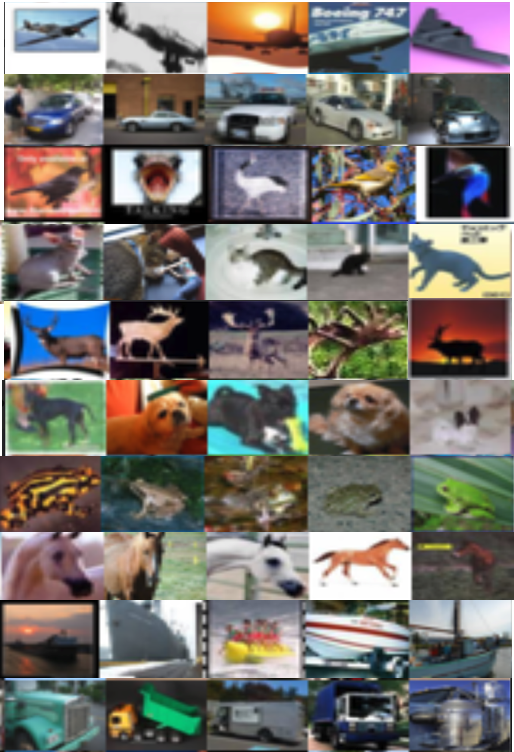}}
  \caption{CIFAR-10 Most normal and in-class anomalous CIFAR10 digits detected by RCAE. }
  \label{fig:cifar10_rcae_results}
\end{figure}

\begin{figure}[htp]
      \centering
      \subcaptionbox{Normal samples.\label{fig3:a}}{\includegraphics[width=1.6in]{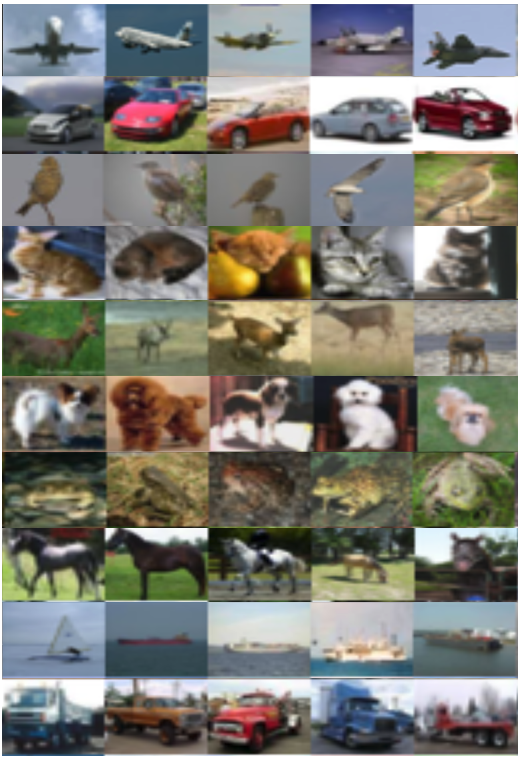}}\hspace{1em}%
      \subcaptionbox{In-class anomalies.\label{fig3:b}}{\includegraphics[width=1.6in]{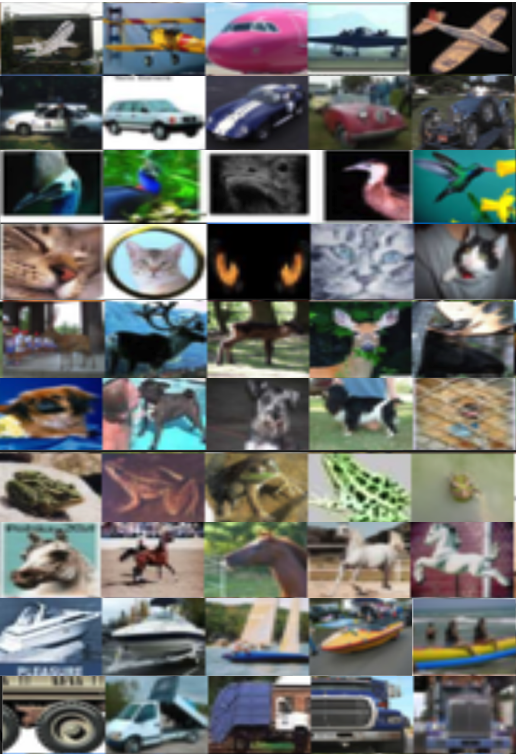}}
    \caption{CIFAR-10 Most normal and in-class anomalous CIFAR-10 digits detected by OC-NN. }
    \label{fig:cifar10_ocnn_results}
\end{figure}

\begin{figure}[htp]
          \centering
          \subcaptionbox{Normal samples.\label{fig3:a}}{\includegraphics[width=1.6in]{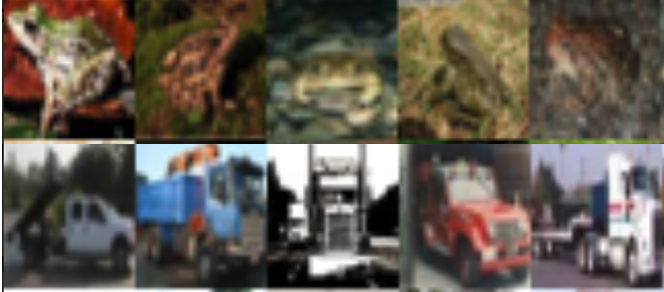}}\hspace{1em}%
          \subcaptionbox{In-class anomalies.\label{fig3:b}}{\includegraphics[width=1.6in]{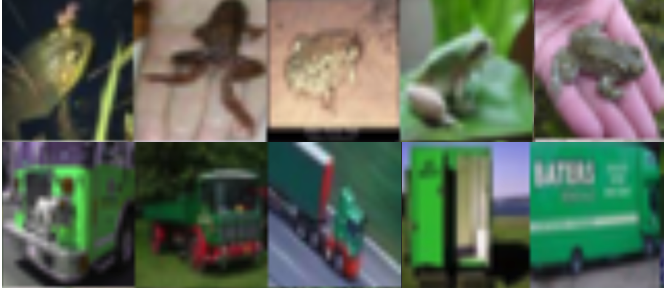}}
       \caption{Most normal (left) and most anomalous (right) in-class
       examples determined by OCSVM-SVDD for in which OCSVM-SVDD performs best.}
        \label{fig:ocsvmresults}
 \end{figure}

\subsection{ Detecting Adversarial attacks on GTSRB stop signs using OC-NN }
\textbf{Setup:}
In many applications (eg. autonomous driving) it is of paramount interest to effectively detect the adversarial samples to ensure safety and security. In this experiment, we
examine  the performance of proposed algorithms on  detecting
adversarial examples. We consider the “stop sign”
class of the German Traffic Sign Recognition Benchmark
(GTSRB) dataset, for which we generate adversarial examples
from randomly drawn stop sign images of the test set
using Boundary Attack~\cite{brendel2017decision}. We create training dataset consists of n = 1150 examples obtained by combining the normal and anomalous samples in both train and test samples. The  number of normal instances n = 1050 stop signs ( 780  from train set +  270 test set ) alongwith  100
adversarial examples are added to obtain training set. We pre-process the
data by removing the 10\% border around each sign, and then resize every image to $32 \times 32$ pixels following the same setup as in ~\cite{pmlrv80ruff18a}. Furthermore, we  apply global contrast normalization using the $L1-norm$ and rescale to the unit interval $[0, 1]$.\\
\textbf{Network architecture} We use a CNN with LeNet architecture having three convolutional modules, $16\times(5\times5\times3)$-filters, $32 \times (5 \times 5 \times 3)$-filters, and $64 \times (5 \times 5 \times 3)$-filters, followed by a final dense layer of 32 units. We train with a smaller batch size of 64, due to the dataset size and set again hyperparamter $\lambda = 10^{ - 6}$.\\
\textbf{Results:}
Results Table ~\ref{tab:gtsrbresults} illustrates the AUC scores obtained. The RCAE outperforms all the other deep models. Figure ~\ref{fig:gtsrbrcaeresults} and Figure ~\ref{fig:gtsrbocnnresults}  shows the most anomalous samples detected by RCAE and OCNN methods respectively, the outliers in this experiment are the images in odd perspectives and the ones that are cropped incorrectly.

\begin{table*}[!t]
   \caption{Average AUCs in \% with StdDevs (over 10 seeds) per method on MNIST and CIFAR-10 dataset.}
   \label{tab:ocnn_results}
   \small 
   \centering 
   \scalebox{0.6}{
   \begin{tabular}{lccccccccr} 
   \toprule[\heavyrulewidth]\toprule[\heavyrulewidth]
   \textsc{\pbox{20cm}{Normal \\ Class}} & \textsc{\pbox{20cm}{OCSVM / \\ SVDD}} & \textsc{KDE} &  \textsc{IF} & \textsc{DCAE} & \textsc{ANOGAN} & \textsc{\pbox{20cm}{SOFT-BOUND \\ DEEP SVDD}} & \textsc{\pbox{20cm}{ONE-CLASS \\ DEEP SVDD}} & \textsc{OC-NN} & \textsc{RCAE} \\
   \midrule
   0 & $96.75\pm0.5$ & $97.1\pm0.0$ & $95.32\pm1.2$  & $99.90\pm0.0$ & $96.6\pm1.3$ & $98.66\pm1.3$     & $97.78\pm0.0$ & $97.60\pm1.7$ &
   $\bf{99.92\pm0.0}$\\
   1 & $99.15\pm0.4$ & $98.9\pm0.0$ & $99.35\pm0.0$  & $99.96\pm2.1$ & $99.2\pm0.6$ & $99.15\pm0.0$     & $99.08\pm0.0$ & $\color[rgb]{0,0,1}99.53\pm0.0$ & $\bf{99.97\pm2.2}$\\
   2 & $79.34\pm2.2$ & $79.0\pm0.0$ & $73.15\pm4.5$  & $96.64\pm1.3$ & $85.0\pm2.9$ & $88.09\pm2.2$     & $88.74\pm1.2$ & $87.32\pm2.1$ & $\bf{98.01\pm1.2}$\\
   3 & $85.88\pm1.3$ & $86.2\pm0.0$ & $81.34\pm2.8$ & $98.42\pm0.0$ &  $88.7\pm2.1$ &  $88.93\pm3.4$     & $88.26\pm3.2$ & $86.52\pm3.9$ & $\bf{99.25\pm0.0}$\\
   4 & $94.18\pm1.5$ & $87.9\pm0.0$ & $87.40\pm2.6$  & $98.72\pm0.0$ & $89.4\pm1.3 $& $93.88\pm2.3$     & $95.24\pm1.4$ & $93.25\pm2.4$ & $\bf{99.23\pm0.0}$\\
   5 & $72.77\pm3.7$ & $73.8\pm0.0$ & $73.96\pm2.9 $ & $97.80\pm1.3$ & $88.3\pm2.9$ & $84.35\pm3.1$     & $83.76\pm3.1$ & $\color[rgb]{0,0,1}86.48\pm3.3$ &$ \bf{99.21\pm0.0}$\\
   6 & $95.14\pm1.1$ & $87.6\pm0.0$ & $88.65\pm0.0$  & $99.74\pm0.0$ & $94.7\pm2.7$ & $97.74\pm0.0$     & $97.99\pm0.0$ & $97.12\pm1.4$ & $\bf{99.81\pm1.1}$\\
   7 & $91.86\pm1.6$ & $91.4\pm0.0$ & $91.44\pm1.8$  & $99.08\pm2.2$ & $93.5\pm1.8$ & $92.60\pm1.2$     & $93.55\pm2.3$ & $93.64\pm2.1$ & $\bf{99.18\pm0.0}$\\
   8 & $88.65\pm1.2$ & $79.2\pm0.0$ & $75.10\pm3.7$  & $96.98\pm0.0$ & $84.9\pm2.1$ & $90.69\pm3.3$     & $90.25\pm3.1$ & $88.54\pm4.7$ & $\bf{98.50\pm2.2}$\\
   9 & $92.53\pm1.9$ & $88.2\pm0.0$ & $87.59\pm1.5$  & $98.04\pm1.3$ & $92.4\pm1.1$ & $94.28\pm2.5$     & $94.12\pm2.4$ & $93.54\pm3.3$ & $\bf{98.98\pm1.3}$\\
   \hline
  \textsc{Aeroplane}    & $60.37\pm1.3$ & $61.2\pm0.0$ & $63.99\pm1.1$  &$71.21\pm1.5$& $67.1\pm2.5$ & $66.15\pm1.1$ & $67.33\pm2.2$ & $60.42\pm1.9$  & $\bf{72.04\pm2.5}$\\
   \textsc{Automobile}  & $63.03\pm1.4$ & $63.0\pm0.0$ & $60.56\pm1.0$  &$63.05\pm2.3$& $54.7\pm3.4 $& $57.64\pm3.2$ & $58.14\pm3.1$ & $\color[rgb]{0,0,1}61.97\pm2.0$  & $\bf{63.08\pm2.1}$\\
   \textsc{Bird}        & $63.47\pm1.0 $& $50.1\pm0.0$ & $64.51\pm1.1$  &$71.50\pm1.1$& $52.9\pm3.0 $& $61.99\pm1.4$ & $61.35\pm1.5$ & $\color[rgb]{0,0,1} {63.66\pm1.4}$ & $\bf{71.67\pm1.3}$\\
   \textsc{Cat}         & $60.25\pm0.9$ & $56.4\pm0.0$ & $56.16\pm2.6$  &$60.57\pm0.0$& $54.5\pm1.9$ & $57.56\pm4.1$ & $55.72\pm1.4$ & $53.57\pm2.1$  & $\bf{60.63\pm1.1}$\\
  \textsc{Deer}         & $69.15\pm0.8$ & $66.2\pm0.0 $& $72.66\pm1.1 $ &$70.85\pm2.2$& $65.1\pm3.2$ & $63.36\pm1.3$ & $63.32\pm1.2$ & $\color[rgb]{0,0,1}67.40\pm1.7 $ & $\bf{72.75\pm3.3}$\\
   \textsc{Dog}         & $66.24\pm1.5$ & $62.4\pm0.0$ & $61.46\pm2.7 $ &$62.74\pm2.1$& $60.3\pm2.6$ & $58.58\pm1.2$ & $58.68\pm1.4$ & $56.11\pm2.1$  & $\bf{63.96\pm3.3}$\\
   \textsc{Frog}        & $\bf{71.57\pm1.5}$ & ${71.3\pm0.0}$ & $68.06\pm2.1 $ &$65.17\pm4.1$& $58.5\pm1.4 $& $63.93\pm3.1 $& $64.45\pm2.1$ & $63.31\pm3.0$  & ${64.88\pm4.2}$\\
   \textsc{Horse}       & $63.38\pm0.8 $& $62.6\pm0.0$ & $63.04\pm1.5$  &$61.11\pm1.4$& $62.5\pm0.8$ & $60.20\pm2.2$ & $59.80\pm2.6$ & $60.09\pm2.7$  &
   $\bf{63.64\pm0.0}$\\
  \textsc{Ship}         & $60.44\pm1.1$ & ${65.1\pm0.0}$ & $68.01\pm1.3$  &$74.18\pm1.2$& $74.68\pm4.1$ & $70.21\pm1.1$ & $67.44\pm2.2$ & $64.67\pm1.6$  & $\bf{74.72\pm1.1}$\\
  \textsc{Truck}        & $\bf{75.81\pm0.8}$ & ${74.0\pm0.0}$ & $72.83\pm1.1$  &$71.36\pm4.3$& $66.5\pm2.8 $& $72.91\pm3.3$ & $68.03\pm3.2$ &$ 60.32\pm4.9 $ & ${74.47\pm1.5}$\\
   \bottomrule[\heavyrulewidth]
   \end{tabular}}
\end{table*}

\begin{table*}[!t]
\caption{Average AUCs in \% with StdDevs (over 10 seeds) per
method on GTSRB stop signs with adversarial attacks.} 
\centering 
\scalebox{0.7}{
\begin{tabular}{l l } 
\hline\hline 
\bf{\textsc{Method}} & \bf{\textsc{AUC}}\\ [0.5ex] 
\hline 
OC-SVM/SVDD & $52.5 \pm 1.3$ \\ 
KDE & $51.5 \pm 1.6$ \\
IF & $53.37\pm 2.9$ \\
AnoGAN & - \\
DCAE & $79.1\pm3.0$ \\
SOFT-BOUND. DEEP SVDD & $67.53\pm4.7$ \\
ONE-CLASS. DEEP SVDD & $67.08\pm4.7$ \\
OC-NN & $63.53\pm2.5$ \\
\bf{\bf{RCAE $\lambda =0$}} &\bf{$87.45\pm3.1$} \\
\bf{RCAE} & {$\bf{87.39 \pm 2.7}$} \\ [1ex] 
\hline 
\end{tabular}}
\label{tab:gtsrbresults}
\end{table*}

\begin{figure}[htp]
      \centering
      \subcaptionbox{Top 50 Normal.\label{fig3:a}}{\includegraphics[width=1.6in,height=1.6in]{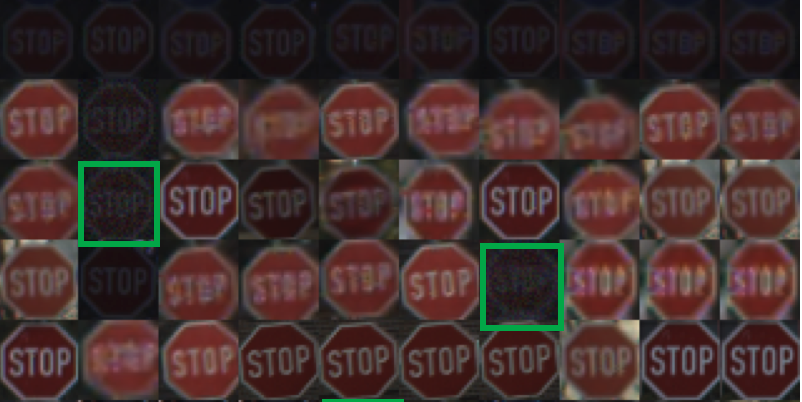}}\hspace{1em}%
      \subcaptionbox{Top 50 anomalies.\label{fig3:b}}{\includegraphics[width=1.6in,height=1.6in]{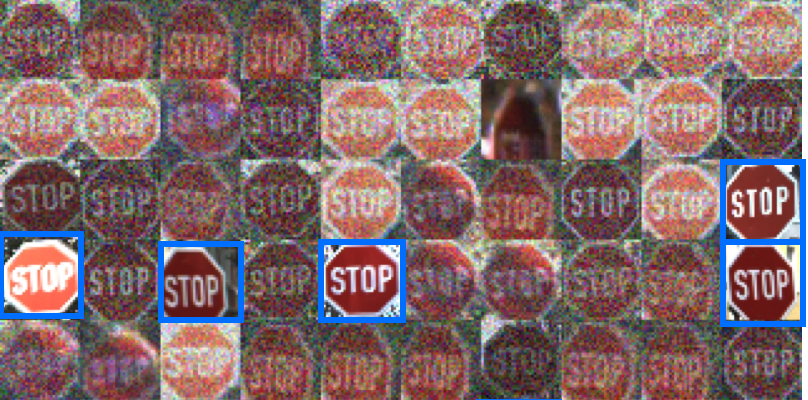}}
      \caption{Most normal and anomalous stop signs detected by RCAE. Adversarial examples are highlighted in green, Normal samples are highlighted in blue}
      \label{fig:gtsrbrcaeresults}
   \end{figure}

\begin{figure}[htp]
      \centering
      \subcaptionbox{Top 50 Normal.\label{fig3:a}}{\includegraphics[width=1.6in,height=1.6in]{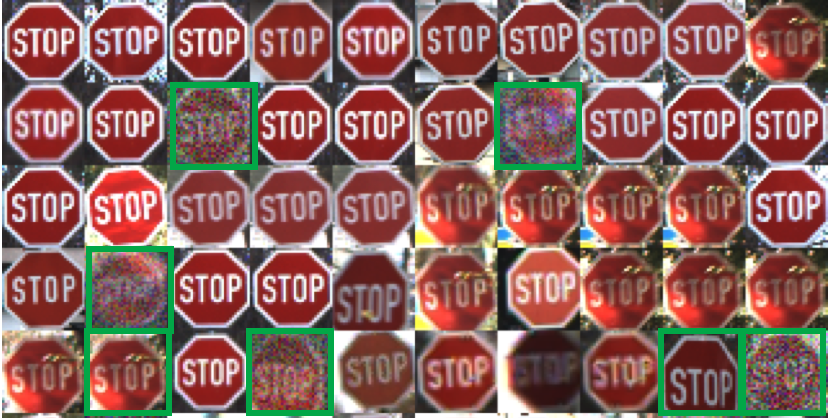}}\hspace{1em}%
      \subcaptionbox{Top 50 anomalies.\label{fig3:b}}{\includegraphics[width=1.6in,height=1.6in]{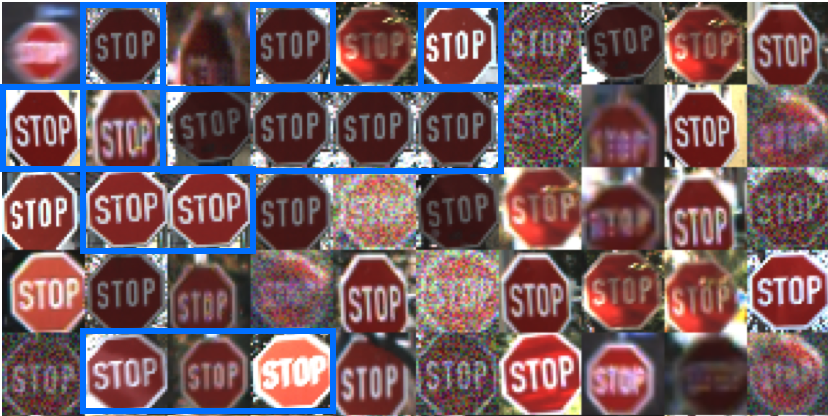}}
      \caption{Most normal and anomalous stop signs detected by OC-NN.  Adversarial examples are highlighted in green, Normal samples are highlighted in blue}
      \label{fig:gtsrbocnnresults}
      \end{figure}

\section{Conclusion}
\label{sec:conclusion}
In this paper, we have proposed a one-class neural network (OC-NN) approach for anomaly detection.
OC-NN uses a one-class SVM (OC-SVM) like loss function to train a neural network.
The advantage of OC-NN is that the features of the hidden layers are constructed for the specific
task of anomaly detection. This approach is substantially different from recently proposed
hybrid approaches which use deep learning features as input into an anomaly detector.  Feature
extraction in hybrid approaches is generic and not aware of the anomaly detection task. To learn
the parameters of the OC-NN network we have proposed a novel alternating minimization approach and
have shown that the optimization of a subproblem in OC-NN is equivalent to a quantile selection

\bibliographystyle{unsrtnat}
\bibliography{main}

\end{document}